\newtheorem{theorem}{Theorem}
\crefname{section}{Sec.}{Secs.}
\Crefname{section}{Section}{Sections}
\Crefname{table}{Table}{Tables}
\crefname{table}{Tab.}{Tabs.}
\begin{document}

\title{NoisyQuant: Noisy Bias-Enhanced Post-Training Activation Quantization for \\Vision Transformers}

\author{
Yijiang Liu$^{*1}$, Huanrui Yang$^{*2}$, Zhen Dong$^2$, Kurt Keutzer$^2$, Li Du$^1$\textsuperscript{\Letter}, Shanghang Zhang$^3$\textsuperscript{\Letter} \\
$^1${\normalsize Nanjing University}, \
$^2${\normalsize University of California, Berkeley}\\
$^3${\normalsize National Key Laboratory for Multimedia Information Processing, School of Computer Science, Peking University}\\
{\tt\small liuyijiang@smail.nju.edu.cn, \{huanrui, zhendong, keutzer\}@berkeley.edu}\\
{\tt\small ldu@nju.edu.cn, shanghang@pku.edu.cn }
}

\maketitle
\newcommand\blfootnote[1]{%
\begingroup
\renewcommand\thefootnote{}\footnote{#1}%
\addtocounter{footnote}{-1}%
\endgroup
}

\blfootnote{* Equal contribution.}
\blfootnote{\textsuperscript{\Letter} Corresponding Author.}

\begin{abstract}
   The complicated architecture and high training cost of vision transformers urge the exploration of post-training quantization. 
   However, the heavy-tailed distribution of vision transformer activations hinders the effectiveness of previous post-training quantization methods, even with advanced quantizer designs. 
   Instead of tuning the quantizer to better fit the complicated activation distribution, this paper proposes NoisyQuant, a quantizer-agnostic enhancement for the post-training activation quantization performance of vision transformers.
   We make a surprising theoretical discovery that for a given quantizer, adding a fixed Uniform noisy bias to the values being quantized can significantly reduce the quantization error under provable conditions.
   Building on the theoretical insight, NoisyQuant achieves the first success on actively altering the heavy-tailed activation distribution with additive noisy bias to fit a given quantizer. 
   Extensive experiments show NoisyQuant largely improves the post-training quantization performance of vision transformer with minimal computation overhead. For instance, on linear uniform 6-bit activation quantization, NoisyQuant improves SOTA top-1 accuracy on ImageNet by up to 1.7\%, 1.1\% and 0.5\% for ViT, DeiT, and Swin Transformer respectively, achieving on-par or even higher performance than previous nonlinear, mixed-precision quantization. 
\end{abstract}

\section{Introduction}
\label{sec:intro}

\begin{figure*}[t]
    \centering
    \includegraphics[width=0.95\textwidth]{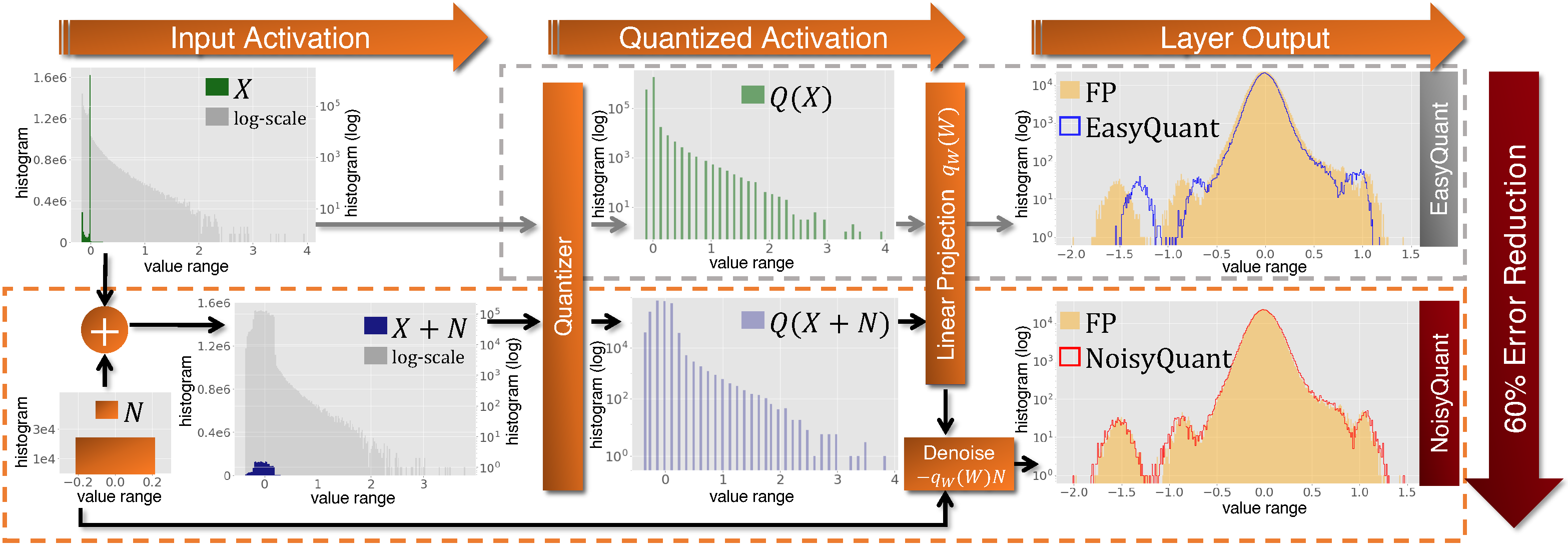}
    \caption{\textbf{Overview of the NoisyQuant pipeline (orange box) with comparison to EasyQuant~\cite{Wu2020EasyQuantPQ} (gray box).} Histograms of input activations, quantized activations, and layer outputs are illustrated in the pipeline. Starting from the input activation $X$ following a GELU function, where the green histogram is plotted in linear-scale and the gray in log-scale, NoisyQuant adds a fixed NoisyBias $N$ sampled from a Uniform distribution onto $X$. Adding the noise before quantization flattens the peaks in the activation distribution to make it more friendly to quantization, as visualized in the linear/log-scaled histogram of $X+N$, and theoretically proved in~\cref{ssec:theo}. The noise is removed from the result of the fixed-point activation-weight multiplication with a denoising bias term, which we formulate in~\cref{ssec:dep}. The rightmost sub-figures illustrate the layer output distribution of EasyQuant and NoisyQuant. Compared to the output of the floating-point model (yellow), NoisyQuant output follows the distribution closely and achieves up to 60\% output error reduction on some transformer layers, as shown in~\cref{sec:ablation}. The resulting PTQ performance improvement of the entire model is provided in~\cref{sec:exp}.}
    \label{fig:NoisyDistribution}
    \vspace{-10pt}
\end{figure*}

Inspired by the success of Self Attention (SA)-based transformer models in Natural Language Processing (NLP) tasks~\cite{vaswani2017attention}, recent researches make significant progress in applying transformer models to the field of computer vision~\cite{dosovitskiy2020image,Touvron2021TrainingDI,Liu2021SwinTH,carion2020end}. In the meantime, the typical design of transformer models induces large model sizes, high computational consumption, and long training time. For instance, the widely used DeiT-Base model~\cite{Touvron2021TrainingDI} contains 86M parameters, logs 18G floating-point operations for a single input, and requires 300 epochs of training on the ImageNet dataset. This leads to significant difficulties in hardware deployment. In facing such difficulty, a number of compression and acceleration methods are applied to vision transformer models, including pruning~\cite{chen2021chasing,yang2021nvit}, quantization~\cite{liu2021post,yuan2021ptq4vit}, and neural architecture search~\cite{chen2021autoformer},~\etc.

Among these methods, quantization appears as one of the most effective and widely applied ways~\cite{gholami2021survey}.
Quantization process uses a predefined ``quantizer'' function to convert the continuous representation of weights and activations into a small number of discrete symbols, therefore enabling low-precision representations for straightforward memory savings. For DNN models, the approximation error made by the quantizer inevitably leads to performance drop. 
A series of work focuses on Quantization-Aware Training (QAT) that finetunes the quantized model at low precision~\cite{zhou2016dorefa,polino2018model,dong2019hawq,dong2020hawq,xiao2022csq}. However, given the high training cost and the complicated computation graph of vision transformer models, retraining the model at low precision could be costly and unstable~\cite{yuan2021ptq4vit}.
Alternatively, Post-Training Quantization (PTQ) is preferable for vision transformers as it eliminates the need for re-training or finetuning the quantized model, instead only adjusts the design of the quantizer based on the full-precision pretrained model and a small set of sampled calibration data~\cite{Wu2020EasyQuantPQ,Cai2020ZeroQAN,Liu2021PostTrainingQF,Wang2020TowardsAP,Bhalgat2020LSQIL}. 

For example, linear quantizers~\cite{choi2018pact,shen2020q,yang2021bsq,Wu2020EasyQuantPQ} reduce quantization error by shifting, clipping, and scaling the values to be quantized. Nonlinear quantizers~\cite{zhang2018lq,han2015deep,yuan2021ptq4vit} further adjust the width and location of each quantization bin to better fit the distribution. 

Unfortunately, though progresses are made in designing better PTQ quantizers, it still appears to be significantly challenging to quantize vision transformer models, especially for activation quantization. 
Transformer layers produce up to millions of activation values with sophisticated distributions.
For instance, outputs of the GELU function~\cite{hendrycks2016gaussian} are asymmetrically distributed, with spikes in the histogram at some values and a long tail in a large range (see top-left of~\cref{fig:NoisyDistribution}).
Some linear projection layers also lead to significantly large activation values that are sparsely distributed in a very long tail~\cite{Liu2021PostTrainingQF}.
Consequently, low-precision PTQ on vision transformer suffers from performance degradation, even utilizing non-linear or mixed-precision quantizers at the cost of additional data communication and computation overheads~\cite{Liu2021PostTrainingQF,Yuan2022PTQ4ViTPQ}. \textit{No} linear uniform PTQ method achieves good performance on vision transformer models.

This paper provides a brand new perspective in dealing with the complicated vision transformer activation distribution. Instead of adding more tricks in the quantizer design to fit the activation distribution, this work explores the potential of actively and cost-effectively altering the distribution being quantized, making it more friendly to a given quantizer.
The pipeline of our proposed method is illustrated in~\cref{fig:NoisyDistribution}.
Specifically, we make a surprising discovery that for any quantizer, the quantization error can be significantly reduced by adding a fixed noisy bias sampled from a Uniform distribution to the activation before quantization. We theoretically prove the condition when the quantization error reduction can be achieved. 
On this basis, we propose \textbf{NoisyQuant}, a \textit{plug-and-play}, quantizer-agnostic enhancement on the post-training activation quantization of vision transformer models. For each layer, we sample a Noisy Bias based on the input activation distribution following our theoretical analysis, and compute the corresponding denoising bias to retain the correct output of the layer. At inference time, the Noisy Bias is added to the input activation before quantization, and the denoising bias is removed from the layer output. This process significantly reduces the quantization error with minimal computation overhead.
NoisyQuant leads to significant improvement in the PTQ performance of state-of-the-art vision transformers. Applying NoisyQuant on top of a uniform linear quantization achieves on-par performance to SOTA mixed-precision, nonlinear PTQ methods~\cite{yuan2021ptq4vit,Liu2021PostTrainingQF}. Adding NoisyQuant on top of these nonlinear quantizers achieves further performance gain.

To the best of our knowledge, this paper makes the following novel contributions:
\begin{itemize}
    \item Theoretically shows the possibility and proves feasible conditions for reducing the quantization error of heavy-tailed distributions with a fixed additive noisy bias;
    \item Proposes NoisyQuant, a quantizer-agnostic enhancement for PTQ performance on activation quantization. NoisyQuant achieves the first success in actively refining the distribution being quantized to reduce quantization error following the theoretical results on additive noisy bias, with minimal computation overhead; 
    \item Demonstrates consistent performance improvement by applying NoisyQuant on top of existing PTQ quantizers. For 6-bit PTQ, NoisyQuant improves the ImageNet top-1 accuracy of uniform linear quantized vision transformer models by up to 1.7\%, and improves SOTA nonlinear quantizer PTQ4ViT~\cite{Yuan2022PTQ4ViTPQ} by up to 0.7\%.
\end{itemize}

\section{Related work}
\label{sec:background}

\subsection{Transformer models in computer vision}

Since Self-Attention (SA)~\cite{vaswani2017attention}-based transformer models have shown impressive performance in Natural Language Processing tasks, researchers transfer the idea into computer vision. Pioneering work on Vision Transformer~\cite{dosovitskiy2020image} for the first time builds a totally SA-based model in the field of computer vision and shows its effectiveness. Since then, a variety of transformer-based vision models have emerged with boosted performance, claiming state-of-the-art in multiple computer vision tasks. Notable advances include additional data augmentation and distillation~\cite{Touvron2021TrainingDI}, incorporating multi-stage architecture~\cite{Liu2021SwinTH,wang2021pyramid}, and explore novel attention design~\cite{Liu2021SwinTH,yuan2021tokens,hatamizadeh2022global}. 
Transformer models also advance to tasks beyond classification, such as object detection~\cite{carion2020end}, semantic segmentation~\cite{SETR}, and image generation~\cite{lee2021vitgan}. Furthermore, the development of vision transformer brings unification of model architecture and representation across image and text, enabling stronger vision-language connection via image-text-pair contrastive learning~\cite{Radford2021LearningTV}.

The rapid development of vision transformer models motivates the exploration of quantization, as quantization brings straightforward memory and computation savings agnostic to model architectures.
Compared to CNN models, the use of advanced activation functions like GELU~\cite{hendrycks2016gaussian} and more complicated computation graphs like the attention mechanism makes vision transformers more sensitive to quantization~\cite{Yuan2022PTQ4ViTPQ}. This work aims to resolve the difficulty of post-training vision transformer activation quantization caused by the heavy-tailed activation distribution, which we believe will facilitate the deployment of state-of-the-art transformer models into real-world computer vision applications.

\subsection{Post-training quantization methods}

Post-training quantization (PTQ) is a preferable quantization method in scenarios with limited training data or limited computational resources for the expensive quantization-aware fine-tuning process. Previous methods have dedicatedly studied PTQ on CNNs. For instance, EasyQuant~\cite{Wu2020EasyQuantPQ} presents a quick searching mechanism to determine the proper clipping range for quantization. ZeroQ~\cite{Cai2020ZeroQAN} proposes a distillation mechanism to generate proxy input images, which can leverage inherent statistics of batch normalization layers to conduct PTQ. SQuant~\cite{guo2022squant} calibrates the model for lower quantization error on the basis of the sensitivity obtained by the Hessian spectrum. On vision transformers, Liu~\etal~\cite{Liu2021PostTrainingQF} presents a linear PTQ method that uses Pearson correlation coefficients and ranking loss to determine scaling factors. PTQ4ViT~\cite{Yuan2022PTQ4ViTPQ} introduces a nonlinear Twin Uniform Quantization based on vision transformers' activation distributions, which set different scaling factors for 1) positive and negative activations of GeLU, and 2) small and large values of Softmax, which can reduce the activation quantization error to some extent with the cost of additional computation overhead.
\cite{wei2022outlier,li2022repq} reparameterize the LN layer to suppress outliers by scaling down activation values.

Unlike previous PTQ methods that analyze the activation distribution and fit the quantizer accordingly, our method takes a novel perspective of actively modifying the distribution being quantized with a sampled noisy bias. We show our proposed method can bring quantization error reduction and performance enhancement for all the PTQ quantizers for vision transformer activation quantization. 
\section{Method}
\label{sec:method}

In this section, we provide a preliminary introduction to the notation and typical formulation of DNN quantization in~\cref{ssec:form}, theoretically analyze the potential of reducing quantization error with pre-sampled additive noisy bias in~\cref{ssec:theo}, and formulate NoisyQuant, a noisy bias-enhanced post-training activation quantization method with reduced quantization error in~\cref{ssec:dep}.

\subsection{Preliminary}
\label{ssec:form}

To start with, we recap the post-training quantization process on DNN models. Given the dominance of fully-connected (FC) linear projection layers in transformer models, here we focus our discussion on the FC layer.

The computation performed in a FC layer with weight $W\in \mathbb{R}^{k\times m}$, input activation $X\in \mathbb{R}^{m\times n}$, and bias $B\in \mathbb{R}^{k\times 1}$ can be formulated as 
\begin{equation}
\label{equ:linear}
    f(X) = WX + B.
\end{equation}
The main computation burden in~\cref{equ:linear} lies in the matrix multiplication of $WX$, which requires a total of $k \times m \times n$ multiply-accumulate computations (MACs). Post-training quantization aims to reduce the computation cost of matrix multiplication by converting $X$ and $W$ to fixed-point quantized values, therefore replacing floating-point MACs with much cheaper fixed-point operations~\cite{horowitz20141,yao2021hawq}.
The quantized FC layer can therefore be represented as
\begin{equation}
\label{equ:quantlinear}
    f_q(X) = q_W(W) q_A(X) + B,
\end{equation}
where $q_W(\cdot)$ and $q_A(\cdot)$ denotes the quantizer function of weight and activation respectively. 
Previous research has observed that the heave-tailed activation distribution of $X$ in transformer is causing significant quantization error between $X$ and $q_A(X)$ at low precision, leading to significant performance degradation.
Previous PTQ methods modify the design of quantizer $q_A(\cdot)$ to mitigate the  quantization error. In this paper, we propose an alternative approach, where we modify the distribution of activation $X$ with a pre-sampled noisy bias before quantization, and prove it to be a plug-and-play enhancement on any quantizer design.

\subsection{Theoretical analysis on quantization error}
\label{ssec:theo}

As previous research mainly blames the heavy-tailed distribution of transformer activation as the main cause for large quantization errors~\cite{yuan2021ptq4vit,Liu2021PostTrainingQF}, in this section we analyze how the quantization error change as we actively alter the activation distribution being quantized. A straightforward way to alter the input activation distribution is to add a fixed ``Noisy Bias'' sampled from a Uniform random distribution. 
Denoting the input activation as $X$ and the Noisy Bias as $N$, where $N$ and $X$ have the same dimensions, here we formulate the quantization error difference between $X$ and $X+N$ for a quantizer $Q$ in~\cref{equ:errordiff}
\begin{equation}
\begin{split}
\label{equ:errordiff}
    &D(X,N) = QE(X+N)-QE(X) =\\
    &||(Q(X+N)-X-N)||_2^2 - ||(Q(X)-X)||_2^2.
\end{split}
\end{equation}
\cref{the:QE} states the condition where $D(x,N) \leq 0$ holds for each histogram snapshot $x$ of activation $X$.

\begin{theorem}
\label{the:QE}
Consider a given quantizer $Q$ with quantization bin width $2b$. For each histogram snapshot of $X$ where all elements $X_i$ have the same distance $x$ away from the center of the quantization bin, and for a Noisy Bias $N$ sampled from $N \sim \mathcal{U}(-n,n)$ where $x \leq n \leq 2b-x$, we have 
\begin{equation}
\label{equ:condition}
    D(x,N) \leq 0 \ \ \text{\textit{iff}} \ \ 0 \leq x \leq n \left(1 - \sqrt{\frac{n}{3b}}\right).
\end{equation}
\end{theorem}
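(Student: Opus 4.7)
The plan is to reduce the $L^2$ comparison to a single-element question, write the expected per-element noisy quantization error as a piecewise integral whose pieces each collapse to an integral of $u^2$, and recognize the resulting inequality in $x$ as a quadratic whose discriminant telescopes to a perfect square.

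First, I would exploit additivity of $\|\cdot\|_2^2$: within one snapshot every entry has the same position relative to the bin center, so after taking expectation over the i.i.d.\ noise entries it suffices to compare $\mathbb{E}_{t \sim \mathcal{U}(-n,n)}[(Q(X_i + t) - X_i - t)^2]$ against the deterministic $(Q(X_i) - X_i)^2$ for one representative entry. Placing the bin at $[-b,b]$ with center $0$ and $X_i$ inside it, the constraint $x \leq n \leq 2b - x$ forces the noisy value $X_i + t$ to lie inside the union of at most two adjacent bins (the current one and its right neighbor $[b, 3b]$); the symmetric case involving the left neighbor is handled identically.

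Second, I would partition $[-n, n]$ at the crossover where $X_i + t$ equals the boundary $b$, and on each sub-interval substitute $u = X_i + t$ or $u = X_i + t - 2b$. Each sub-integral becomes $\int u^2 \, du$ evaluated over a sub-interval of $[-b, b]$, and summing the two gives a closed-form expression proportional to $2 b^3 - (X_i - n)^3 + (X_i + n - 2b)^3$.

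Third, I would subtract the noise-free baseline and clear denominators. The cubic terms in $X_i$ cancel, leaving a quadratic in $x$ with negative leading coefficient of the order $-b/n$. Computing the discriminant of this quadratic and combining it with $3b(b - n)^2$, the cross terms telescope and the discriminant reduces to $n^3/(3b)$, so its square root is $n\sqrt{n/(3b)}$. The quadratic formula then yields the roots, and a sign check at an extreme value of $x$ pins down the iff direction, producing the stated threshold $n(1 - \sqrt{n/(3b)})$.

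The main obstacle will be the discriminant simplification: it is not obvious a priori that $3b(b - n)^2 + n^3 - 3 b n^2 + 6 b^2 n - 3 b^3$ collapses to $n^3$, and getting the signs right inside the two expanded cubes requires careful bookkeeping. A secondary subtlety is handling the degenerate boundary regimes ($n = b - x$, $n = b + x$, and $n = 2b - x$) where an integration sub-interval collapses; continuity of the closed-form expression across these regimes must be verified so that a single quadratic governs the whole range stated in the hypothesis.
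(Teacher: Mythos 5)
Your overall strategy (reduce to a per-element expectation via the law of large numbers, split the integral at the quantizer's decision point, and solve the resulting quadratic in $x$) is the paper's, and your algebraic landmarks — the cubic antiderivatives, cancellation of the cubic terms, leading coefficient $-b/n$, and quarter-discriminant $n^3/(3b)$ — all appear in the correct proof. The problem is your geometric setup, and it is not cosmetic. You place the quantization \emph{target} at the center $0$ of the bin $[-b,b]$, take $x$ to be the distance of $X_i$ from that target, put the crossover at $u=b$, and use the noise-free baseline $x^2$. The theorem is false under that reading: at $x=0$ the element is quantized exactly, so noise can only increase its error (indeed for $n\le b-x$ the noisy value never leaves the bin and $\mathbf{E}[QE]=x^2+n^2/3>x^2$), yet $x=0$ always satisfies $0\le x\le n(1-\sqrt{n/(3b)})$. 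The paper's quantizer instead maps $[0,2b)\mapsto b$ and $[-2b,0)\mapsto -b$: the "center of the bin" at $0$ is the \emph{decision boundary} between two outputs, $x$ is the distance from that boundary, and the baseline is $QE(X)=(b-x)^2$. Only with this convention does the hypothesis $x\le n\le 2b-x$ do its job: it guarantees $[x-n,x+n]$ always contains the boundary $0$ and no other boundary, so the single two-piece integral $\frac{1}{2n}\bigl[\int_{x-n}^{0}(z+b)^2\,dz+\int_{0}^{x+n}(z-b)^2\,dz\bigr]$ is valid over the entire stated range and yields $D=-\frac{b}{n}x^2+2bx+\frac{n^2}{3}-nb$, with roots $n\left(1\mp\sqrt{n/(3b)}\right)$.

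Under your convention the same hypothesis does not confine the noisy value to "the current bin and its right neighbor": for $x<b/2$ and $n$ close to $2b-x$ it dips below $-b$ into the left neighbor (a third piece is needed), and for $n<b-x$ it never reaches $b$ (your second sub-integral has inverted limits), so no single quadratic governs the stated range — the issue is not continuity across degenerate regimes but a genuine change of functional form. Even where your two-piece formula is valid, it gives $D\le 0$ iff $\left(x-(b-n)\right)^2\ge n^3/(3b)$, i.e. $x\ge (b-n)+n\sqrt{n/(3b)}$ on the relevant branch, which is the stated threshold applied to $b-x$ rather than to $x$. Your reading of "distance from the center of the quantization bin" is the more natural one, but it proves a complementary statement; to obtain the literal inequality of the theorem you must measure $x$ from the rounding boundary and take $QE(X)=(b-x)^2$, after which your computation goes through essentially as the paper's does.
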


\begin{proof}
Without loss of generality, we consider the quantization bin of quantizer $Q$ which $x$ falls into to be centered at the 0 point, therefore $0\leq x \leq b$. 

In this way, $Q(X_i)=b$ for all elements in the histogram snapshot, leading to a quantization error
\begin{equation}
\label{equ:QE1}
    QE(X) = (b-x)^2.
\end{equation}

Consider adding $N \sim \mathcal{U}(-n,n)$. $X+N$ will follow $\mathcal{U}(x-n,x+n)$.
In the case of $x \leq n \leq 2b-x$, $Q(X_i+N_i) = b$ if $N_i\in [-x,n]$, and $Q(X_i+N_i) = -b$ if $N_i\in [-n,-x)$ for all elements in $X+N$. Though we only sample a single instance of $N$ from the Uniform distribution, given the large number of elements in the histogram snapshot, the empirical quantization error $QE(X+N)$ can be estimated with an expectation over all $N \sim \mathcal{U}(-n,n)$ following the Weak Law of Large Numbers, as
\begin{equation}
\begin{split}
\label{equ:QE2}
    &\mathbf{E}_N [QE(X+N)] = \frac{1}{2n} \int_{x-n}^{x+n} QE(x+z) \,dz \\
    &= \frac{1}{2n} \left[ \int_{x-n}^{0} (z+b)^2 \,dz + \int_{0}^{x+n} (z-b)^2 \,dz \right] \\
    &= x^2 - \frac{b}{n}x^2 +\frac{n}{3}-nb+b^2.
\end{split}
\end{equation}

Combining \cref{equ:QE1} and \cref{equ:QE2}, we have
\begin{equation}
\begin{split}
\label{equ:error}
    &D(x,N) = \mathbf{E}_{N}\left[ QE(X+N)\right] -QE(X) \\
    & = -\frac{b}{n}x^2 + 2bx +\frac{n^2}{3} -nb.
\end{split}
\end{equation}
We verify this derivation empirically in~\cref{sec:ablation}.

It can be observed that given $b$ and $n$, $D(x,N)$ is a quadratic function with respect to the activation value $x$. The inequality $D(x,N) \leq 0, 0\leq x \leq n$ can be easily solved with respect to $x$ as
\begin{equation}
    0 \leq x \leq n \left(1 - \sqrt{\frac{n}{3b}}\right),
\end{equation}
which is always feasible given both $b$ and $n$ are positive.
\end{proof}

\cref{the:QE} indicates that adding Noisy Bias can always reduce the quantization error of elements close to the center of the quantization bin, which is the source of large quantization error in a heavy-tailed distribution.

In practice, to choose a proper $n$ for a layer in a pretrained model, we can first acquire the empirical distribution of activation $X$ by passing a small amount of calibration data through the pretrained model. With the distribution of $X$, we can estimate the expected quantization error reduction of the layer numerically as a function of $n$
\begin{equation}
\label{equ:obj}
    \mathcal{L}(n) = \sum_{x\in X} [D(x,N)].
\end{equation}
Note that $x$ here denotes the distance between each activation value and the corresponding quantization bin center.
Though we cannot find a closed-form solution for the minima of $\mathcal{L}(n)$, we can easily find a good-enough choice with a linear search over a small range of $n$.

\subsection{NoisyQuant formulation and pipeline}
\label{ssec:dep}

Given our theoretical analysis on reducing quantization error with Noisy Bias, we propose to apply the Noisy Bias as an enhancement to the PTQ methods of vision transformer models, which we name as ``\textbf{NoisyQuant}''. The pipeline of NoisyQuant is visualized in~\cref{fig:NoisyDistribution}. 
Specifically, before performing PTQ and deploying the model, we sample a single Noisy Bias $N\in \mathbb{R}^{m\times 1}$ for each layer from a Uniform distribution, and fix it throughout the inference. The range of $N$ is determined by the calibration objective defined in~\cref{equ:obj}. 
At inference time, we add the Noisy Bias $N$ to the input activation $X$ under the broadcasting rule before going through the activation quantizer, therefore actively altering the distribution being quantized to reduce the input quantization error. 
After the activation-weight multiplication in the linear layer, we remove the impact of $N$ by adjusting the bias term in the linear layer with a denoising bias computed from $N$, therefore retrieving the correct output. Formally, NoisyQuant converts the quantized FC computation defined in~\cref{equ:quantlinear} into 
\begin{equation}
\label{equ:noisyquant}
    f_{Nq}(X) = q_W(W) q_A(X+N) + \left(B-q_W(W) N\right).
\end{equation}

Since $N$ is sampled before the deployment of the model and fixed it throughout the inference, the denoising output bias $B' = B-q_W(W) N$ only needs to be computed once before the deployment, without any computation overhead in the inference process. 
The only computation overhead brought by NoisyQuant at inference time is the on-the-fly summation of $X+N$, which is negligible comparing to the cost of the matrix multiplications. 
Both $B'$ and $N$ shall be stored with higher precision. In practice, we store both variables as INT16 to enable integer-only inference. 

Comparing to the output of the floating-point FC layer defined in~\cref{equ:linear}, the output quantization error induced by the activation quantization defined in~\cref{equ:quantlinear} is
\begin{equation}
\label{equ:quanterror}
    QE_O(X) = ||f_q(X) - f(X)||_2^2 = ||W[Q(X)-X]||_2^2,
\end{equation}
where $Q(\cdot)$ is short of $q_A(\cdot)$, and we omit the weight quantization $q_W(\cdot)$ as we are focusing on the activation quantization. Similarly, the output quantization error after the application of NoisyQuant can be computed as 
\begin{equation}
\begin{split}
\label{equ:quanterror_noise}
    QE_O'(X) &= ||f_{Nq}(X) - f(X)||_2^2 \\
    &= ||W[Q(X+N)-X-N]||_2^2.
\end{split}
\end{equation}

As we prove in~\cref{the:QE} that the Noisy Bias enables $X+N$ to have a lower quantization error than $X$, we can achieve lower output quantization error in~\cref{equ:quanterror_noise} than in~\cref{equ:quanterror} when NoisyQuant is applied, therefore improving the PTQ performance of the model. 

\section{Theoretical insight verification}
\label{sec:ablation}

In this section, we provide empirical verification of our theoretical analysis on reducing the activation quantization error with NoisyQuant. We numerically verify~\cref{the:QE} with simulated data in~\cref{ssec:verify}, and demonstrate the reduction in both input and output quantization errors under the true inference-time activation distribution in~\cref{ssec:layererror}.

\subsection{Empirical verification of~\cref{the:QE}}
\label{ssec:verify}

Here we follow the settings in~\cref{the:QE} to empirically verify its theoretical derivation. Specifically, we set the quantization bin range $b=1$, and explore how the quantization error difference induced by the Noisy Bias change with different choice of activation value $x$ and noisy bias range $n$. For all empirical results we experiment with 10 instances of independently sampled Noisy Bias $N$, and report the mean and standard derivation of $D(X,N)$ defined in~\cref{equ:errordiff} across the 10 instances. We consider input activation $X$ to be a tensor with 20 dimensions. Given the tens to hundreds of thousands of activation values in each transformer layer, it is likely to see more than 20 activation elements taking the same value. As we base our theoretical derivation on the Weak Law of Large Numbers, having more elements taking the same value will lead to less variance than the simulation results provided in this section.

\begin{figure}[tb]
    \centering
    \includegraphics[width=0.9\linewidth]{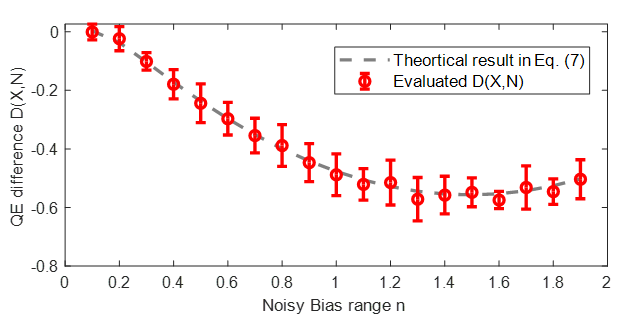}
    \caption{Verifying~\cref{equ:error} with respect to Noisy Bias range $n$. Circle indicates the mean and error bar the std for evaluated results.}
    \label{fig:verify_N}
    \vspace{-10pt}
\end{figure}

For the first experiment, we fix all elements in $X$ to take value $x=0.1$, and alter $n$ in the range of $[0.1,1.9]$. We compare the empirical $D(X,N)$ and the theoretical result derived in~\cref{equ:error} in~\cref{fig:verify_N}. The evaluated results closely follow the theoretical line. Even with only 20 activation elements the standard deviation across independently sampled Noisy Bias is much smaller than the quantization error benefit brought by adding the Noisy Bias.

\begin{figure}[tb]
    \centering
    \includegraphics[width=0.9\linewidth]{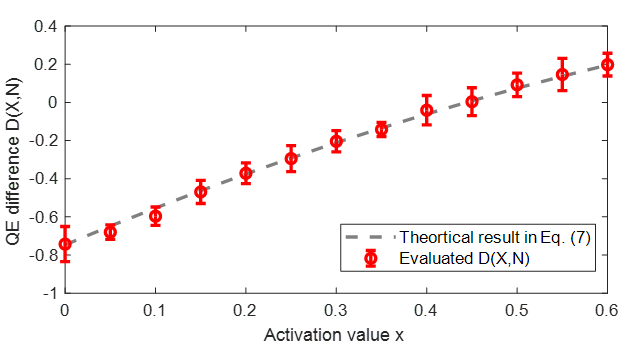}
    \caption{Verifying~\cref{equ:error} with respect to activation value $x$. Circle indicates the mean and error bar the std for evaluated results.}
    \label{fig:verify_X}
    \vspace{-10pt}
\end{figure}

Similarly, we fix $n=1.4$ and alter activation value $x$ in the range of $[0,0.6]$. The results are shown in~\cref{fig:verify_X}, where we see the same trend of close mean and small variance as in the previous experiment. Quantization error can be reduced by Noisy Bias when $x<0.44$, which follows the numerical computation result of~\cref{equ:condition}. Both results show the correctness and stability of our derivation of~\cref{equ:error} and the proof of~\cref{the:QE}.

\subsection{Quantization errors in transformer layers}
\label{ssec:layererror}

\begin{table}
\centering
\caption{\textbf{Averaged quantization error on different types of layers.} The calculation is performed with 5,120 images randomly selected from the ImageNet validation set on the Swin-T model. Layer names are defined as same as in Timm~\cite{rw2019timm}. We compute the input quantization error difference $D$ and compare the output errors between NoisyQuant and EasyQuant.}
\label{qe_drop}
\resizebox{0.85\linewidth}{!}{
\begin{tblr}{
  cells = {c},
  vline{2-3} = {-}{},
  hline{1,3,7} = {-}{},
}
\textbf{Layer} & \textbf{\cref{equ:errordiff}} & \textbf{\cref{equ:quanterror}} & \textbf{\cref{equ:quanterror_noise}} & \textbf{Output} \\
\textbf{Name}  & $D(X)$             & $QE_O(X)$                 & $QE_O'(X)$                       &  \textbf{QE drop}                \\
qkv            & -0.029             & 0.107                   & 0.096                          & 10\%             \\
proj           & -0.106             & 0.093                   & 0.081                          & 13\%             \\
fc1            & -0.527             & 0.126                   & 0.123                          & 2\%              \\
fc2            & -5.130             & 1.049                   & 0.852                          & 19\%             
\end{tblr}
}
\end{table}

\begin{figure}[tb]
    \centering
    \includegraphics[width=0.8\linewidth]{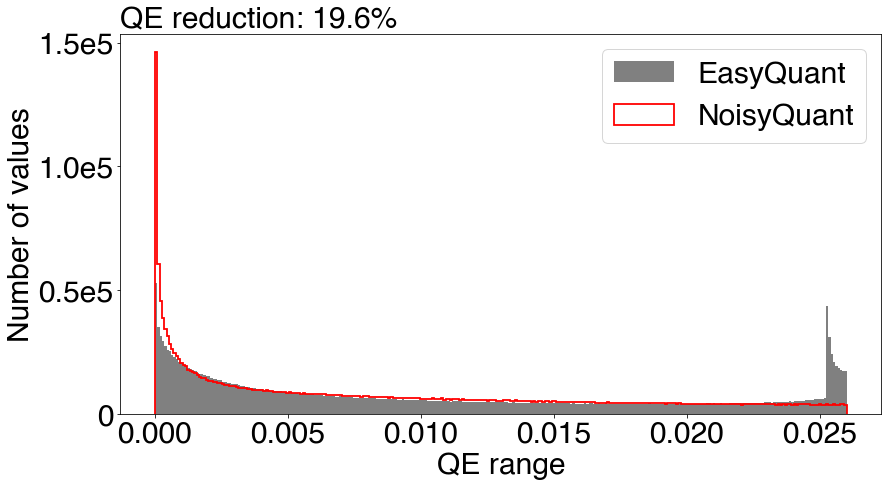}
    \vspace{-8pt}
    \caption{Quantization error histogram of fc2 input activation. We apply NoisyQuant on top of the EasyQuant quantizer, and compare the quantization error distribution (red) with EasyQuant (grey). }
    \label{fig:gelu_noise}
    \vspace{-10pt}
\end{figure}

\begin{figure}[htb]
    \centering
    \begin{minipage}[t]{\linewidth}
		\centering
    \includegraphics[width=\linewidth]{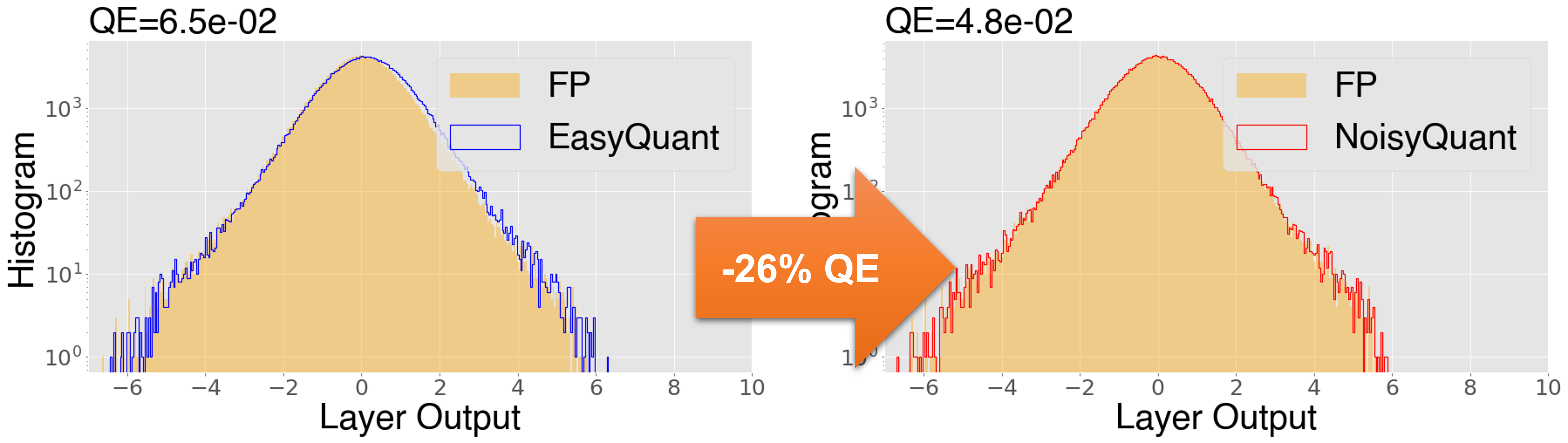}
		\subcaption{26\% $QE_O$ reduction on qkv layer} 
	\end{minipage}
    \begin{minipage}[t]{\linewidth}
		\centering
    \includegraphics[width=\linewidth]{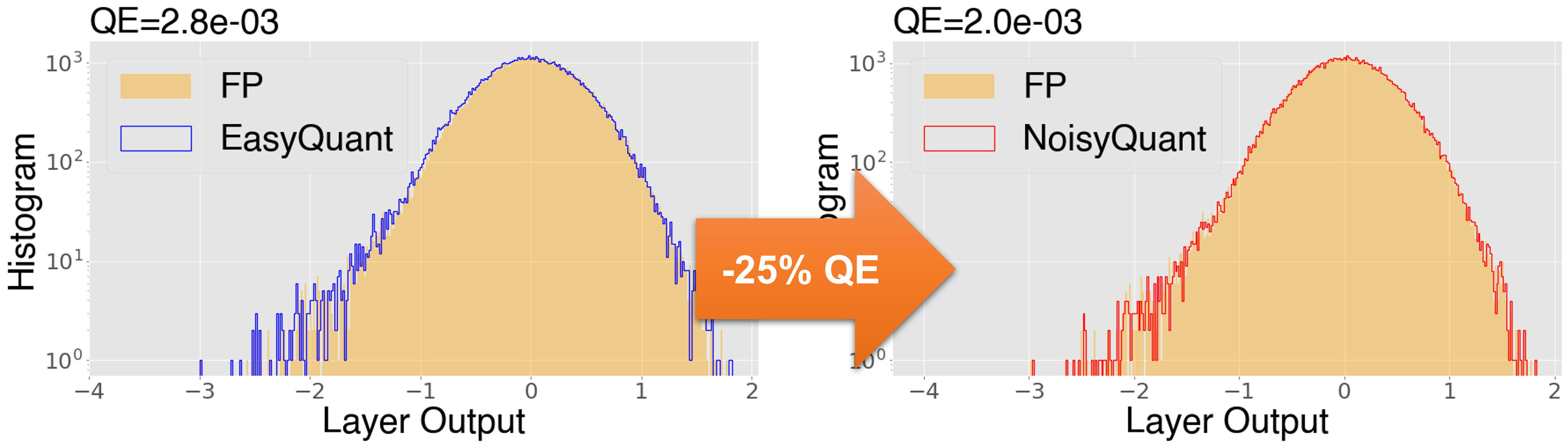}
		\subcaption{25\% $QE_O$ reduction on proj layer} 
	\end{minipage}
    \begin{minipage}[t]{\linewidth}
		\centering
    \includegraphics[width=\linewidth]{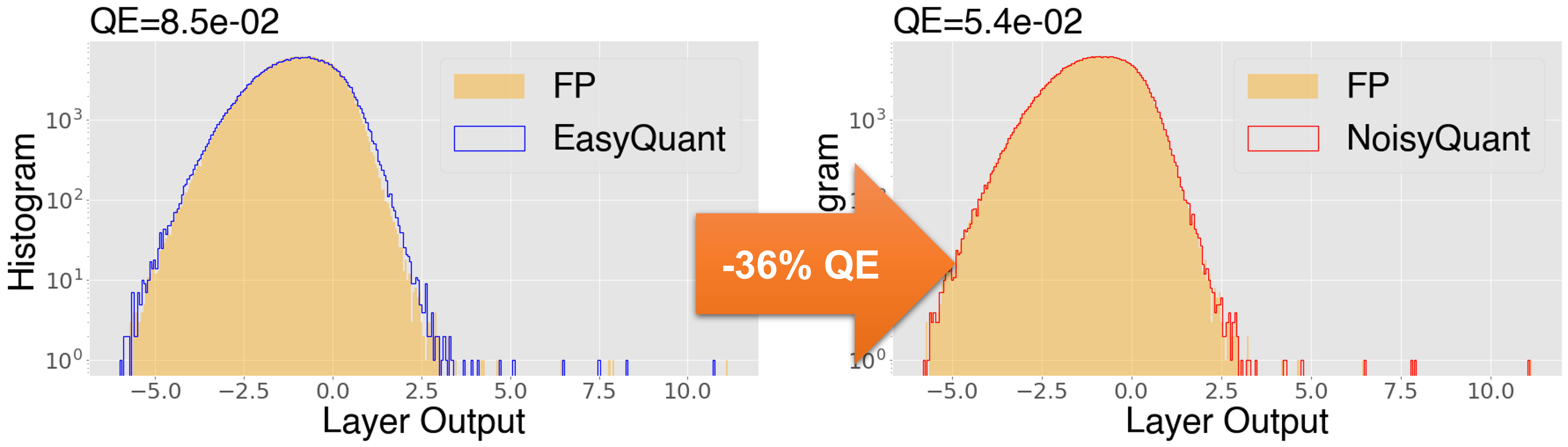}
		\subcaption{36\% $QE_O$ reduction on fc1 layer} 
	\end{minipage}
    \begin{minipage}[t]{\linewidth}
		\centering
    \includegraphics[width=\linewidth]{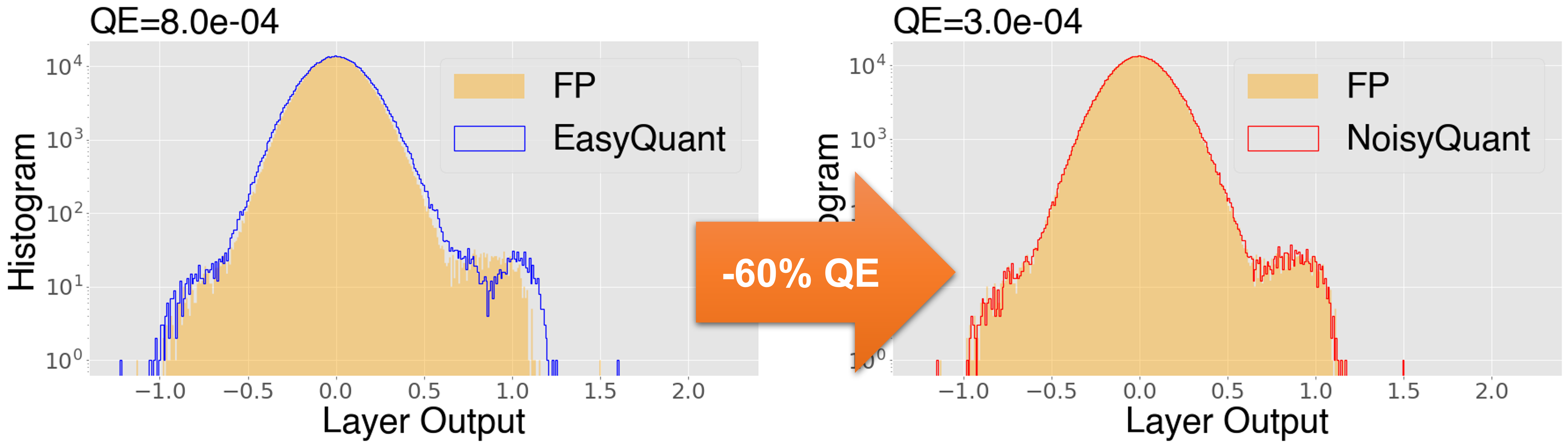}
		\subcaption{60\% $QE_O$ reduction on fc2 layer} 
	\end{minipage}
    \vspace{-8pt}

    \caption{Output histogram in selected transformer layers before and after activation quantization. NoisyQuant significantly reduces the output quantization error over EasyQuant baselines. }
    \label{fig:noisy_dist}
    \vspace{-10pt}
\end{figure}

As mentioned in~\cref{ssec:theo}, NoisyQuant enables the reduction of both input activation quantization error ($QE$) and output error ($QE_O$) of a linear layer given a quantizer. In this section, we verify the error reduction on the real activation distribution of vision transformer model. 
Practically, we run evaluation on randomly selected 5120 images of validation set with linear PTQ quantizer EasyQuant~\cite{Wu2020EasyQuantPQ} as baseline, and calculate $QE$ and $QE_O$ with or without adding Noisy Bias. Experiments are ran on different types of layers, namely qkv, proj, fc1, and fc2, respectively. The results are averaged across all layers of each type. As shown in~\cref{qe_drop}, input quantization error difference $D(X)$ defined in~\cref{equ:errordiff} is consistently lower than 0, which meets our expectation that Noisy Bias reduces input quantization error. This further leads to significant drop in output error between the quantized and floating-point layers for all layer types. Up to 19\% averaged output error drop can be achieved in fc2 layers. 

To further understand the impact NoisyQuant makes, we visualize both the input and output distribution of each layer. We start with highlighting fc2 as it achieves the greatest QE reduction. The fc2 layer takes in the output of GELU activations, which has significantly complicated distribution as introduced in~\cref{sec:intro}. 
\cref{fig:gelu_noise} visualizes the distribution of input quantization error with or without NoisyQuant. With EasyQuant only, significant amount of activation elements are showing large quantization error, which is most probably owing to bad quantization on densely distributed areas. Instead, NoisyQuant generally makes QE distributed plainly and for most elements near zero, therefore leading to significant QE reduction overall. 
Subsequently, we plot the histogram of selected layers' outputs in~\cref{fig:noisy_dist}. NoisyQuant output consistently follows the floating-point layer output distribution closer than that of EasyQuant, especially at values with large magnitudes. This leads to up to 60\% error reduction in the output of some vision transformer layers.

\section{Experiments}
\label{sec:exp}

\begin{table}
\centering
\caption{\textbf{PTQ accuracy of ViT models.} Top-1 accuracy on ImageNet is reported in the table. W/A represents weight and activation bit-width, respectively. The default patch size is 16$\times$16 and the default input resolution is 224$\times$224. ``*'' after model name indicates 384$\times$384 input. ``MP'' represents mixed-precision. For our methods, ``NoisyQuant-Linear'' is implemented on top of the EasyQuant quantizer, whereas ``NoisyQuant-PTQ4ViT'' is implemented on the nonlinear PTQ4ViT quantizer. All NoisyQuant results are shown with mean$\pm$STD of 20 repeated experiments.}
\label{comp_vit}
\resizebox{0.95\linewidth}{!}{
\begingroup
    \setlength{\tabcolsep}{3pt}
\begin{tabular}{ccccc} 
\toprule
\textbf{Method}                                          & \textbf{W/A} & \textbf{ViT-S} & \textbf{ViT-B} & \textbf{ViT-B*}  \\ 
\hline
Pretrained                                                       & 32/32        & 81.39          & 84.54          & 86.00            \\ 
\hline
Percentile~\cite{Li2019FullyQN}                                              & 6/6          & 67.74          & 77.63          & 77.60            \\
Liu~\etal~\cite{Liu2021PostTrainingQF}                                                    & 6 MP          & -              & 75.26              & -                \\
PTQ4ViT-Linear~\cite{Yuan2022PTQ4ViTPQ}                                                 & 6/6          & 70.24          & 75.66          & 46.88   \\
EasyQuant~\cite{Wu2020EasyQuantPQ}                                               & 6/6          & 75.13          & 81.42          & 82.02            \\
\rowcolor[rgb]{1,0.808,0.576} NoisyQuant-Linear                 & 6/6          & \textbf{76.86}\scriptsize{$\pm$0.06}          & \textbf{81.90}\scriptsize{$\pm$0.11}          & \textbf{83.00}\scriptsize{$\pm$0.09}            \\
\hline
PTQ4ViT~\cite{Yuan2022PTQ4ViTPQ}                                                 & 6/6          & 78.63          & 81.65          & \textbf{83.34}   \\
\rowcolor[rgb]{1,0.808,0.576} NoisyQuant-PTQ4ViT & 6/6          & \textbf{78.65}\scriptsize{$\pm$0.07} & \textbf{82.32}\scriptsize{$\pm$0.09} & 83.22\scriptsize{$\pm$0.10}            \\ 
\hline
Percentile~\cite{Li2019FullyQN}                                              & 8/8          & 78.77          & 80.12          & 82.53            \\
Liu~\etal~\cite{Liu2021PostTrainingQF}                                                    & 8 MP          & -              & 76.98              & -                \\
FQ-ViT~\cite{Lin2022FQViTPQ}                                                  & 8/8          & -              & 83.31          & -                \\
PTQ4ViT-Linear~\cite{Yuan2022PTQ4ViTPQ}                                                 & 8/8          & 80.46          & 83.89          & 85.35            \\
EasyQuant~\cite{Wu2020EasyQuantPQ}                                               & 8/8          & 80.75          & 83.89          & 85.53            \\
\rowcolor[rgb]{1,0.808,0.576} NoisyQuant-Linear                 & 8/8          & \textbf{80.81}\scriptsize{$\pm$0.01}          & \textbf{84.10}\scriptsize{$\pm$0.03}          & \textbf{85.56}\scriptsize{$\pm$0.01}            \\
\hline
PTQ4ViT~\cite{Yuan2022PTQ4ViTPQ}                                                 & 8/8          & 81.00          & \textbf{84.25}          & 85.82            \\
\rowcolor[rgb]{1,0.808,0.576} NoisyQuant-PTQ4ViT          & 8/8          & \textbf{81.15}\scriptsize{$\pm$0.02} & 84.22\scriptsize{$\pm$0.01} & \textbf{85.86}\scriptsize{$\pm$0.01}   \\
\bottomrule
\end{tabular}
\endgroup
}
\end{table}

\begin{table}
\centering
\caption{\textbf{PTQ accuracy of DeiT models.} Abbreviations are the same as \cref{comp_vit}.}
\label{comp_deit}
\resizebox{0.95\linewidth}{!}{
\begingroup
    \setlength{\tabcolsep}{3pt}
\begin{tabular}{ccccc} 
\toprule
\textbf{Pretrained}                                          & \textbf{W/A} & \textbf{DeiT-S} & \textbf{DeiT-B} & \textbf{DeiT-B*}  \\ 
\hline
Pretrained                                                       & 32/32        & 79.85           & 81.80           & 83.11             \\ 
\hline
Percentile~\cite{Li2019FullyQN}                                              & 6/6          & 70.49           & 73.99           & 78.24             \\
Bit-Split~\cite{Wang2020TowardsAP}                                              & 6/6          & 74.58           & 76.39           & -                 \\
Liu~\etal~\cite{Liu2021PostTrainingQF}                                                    & 6/6          & 74.58           & 77.02           & -                 \\
Liu~\etal~\cite{Liu2021PostTrainingQF}                                                    & 6 MP          & 75.10           & 77.47           & -                 \\
PTQ4VIT-Linear~\cite{Yuan2022PTQ4ViTPQ}                                                 & 6/6          & 72.26           & 78.78           & 68.44             \\
EasyQuant~\cite{Wu2020EasyQuantPQ}                                               & 6/6          & 75.27           & 79.47           & 81.26             \\
\rowcolor[rgb]{1,0.808,0.576} NoisyQuant-Linear                 & 6/6          & \textbf{76.37}\scriptsize{$\pm$0.13}           & \textbf{79.77}\scriptsize{$\pm$0.08}           & \textbf{81.40}\scriptsize{$\pm$0.06}             \\
\hline
PTQ4VIT~\cite{Yuan2022PTQ4ViTPQ}                                                 & 6/6          & 76.28           & 80.25           & 81.55             \\
\rowcolor[rgb]{1,0.808,0.576} NoisyQuant-PTQ4ViT & 6/6          & \textbf{77.43}\scriptsize{$\pm$0.08}  & \textbf{80.70}\scriptsize{$\pm$0.11}  & \textbf{81.65}\scriptsize{$\pm$0.04}    \\ 
\hline
Percentile~\cite{Li2019FullyQN}                                              & 8/8          & 73.98           & 75.21           & 80.02             \\
Bit-Split~\cite{Wang2020TowardsAP}                                              & 8/8          & 76.39           & 79.42           & -                 \\
Liu~\etal~\cite{Liu2021PostTrainingQF}                                                    & 8/8          & 77.47           & 80.48           & -                 \\
Liu~\etal~\cite{Liu2021PostTrainingQF}                                                    & 8 MP          & 78.09           & 81.29           & -                 \\
FQ-ViT~\cite{Lin2022FQViTPQ}                                                  & 8/8          & 79.17           & 81.20           & -                 \\
PTQ4VIT-Linear~\cite{Yuan2022PTQ4ViTPQ}                                                 & 8/8          & 77.65           & 80.94  & 82.33    \\
EasyQuant~\cite{Wu2020EasyQuantPQ}                                               & 8/8          & 78.98           & 81.19           & 82.10             \\
\rowcolor[rgb]{1,0.808,0.576} NoisyQuant-Linear                 & 8/8          & \textbf{79.11}\scriptsize{$\pm$0.02}           & \textbf{81.30}\scriptsize{$\pm$0.03}           & \textbf{82.23}\scriptsize{$\pm$0.02}             \\
\hline
PTQ4VIT~\cite{Yuan2022PTQ4ViTPQ}                                                 & 8/8          & 79.47           & \textbf{81.48}  & \textbf{82.97}    \\
\rowcolor[rgb]{1,0.808,0.576} NoisyQuant-PTQ4ViT          & 8/8          & \textbf{79.51}\scriptsize{$\pm$0.01}  & 81.45\scriptsize{$\pm$0.02}           & 82.49\scriptsize{$\pm$0.02}             \\
\bottomrule
\end{tabular}
\endgroup
}
\vspace{-10pt}
\end{table}

\begin{table}
\centering
\caption{\textbf{PTQ accuracy of Swin models.} Abbreviations are the same as \cref{comp_vit}.}
\label{comp_swin}

\resizebox{\linewidth}{!}{
\begingroup
    \setlength{\tabcolsep}{2pt}
\begin{tabular}{cccccc} 
\toprule
\textbf{Method}                                         & \textbf{W/A} & \textbf{Swin-T} & \textbf{Swin-S} & \textbf{Swin-B} & \textbf{Swin-B*}  \\ 
\hline
Pretrained                                                      & 32/32        & 81.39           & 83.23           & 85.27           & 86.44             \\ 
\hline
Percentile~\cite{Li2019FullyQN}                                             & 6/6          & 77.75           & 80.41           & 81.90           & 82.94             \\
PTQ4VIT-Linear~\cite{Yuan2022PTQ4ViTPQ}                                                & 6/6          & 78.45           & 81.74           & 83.35           & 85.22             \\
EasyQuant~\cite{Wu2020EasyQuantPQ}                                              & 6/6          & 79.51           & 82.45           & 84.30           & 85.89             \\
\rowcolor[rgb]{1,0.808,0.576} NoisyQuant-Linear                & 6/6          & \textbf{80.01}\scriptsize{$\pm$0.06}           & \textbf{82.78}\scriptsize{$\pm$0.04}           & \textbf{84.57}\scriptsize{$\pm$0.04}           & \textbf{85.96}\scriptsize{$\pm$0.04}            \\
\hline
PTQ4VIT~\cite{Yuan2022PTQ4ViTPQ}                                                & 6/6          & 80.47           & 82.38           & 84.01           & 85.38             \\
\rowcolor[rgb]{1,0.808,0.576} NoisyQuant-PTQ4ViT & 6/6          & \textbf{80.51}\scriptsize{$\pm$0.03}  & \textbf{82.86}\scriptsize{$\pm$0.05}  & \textbf{84.68}\scriptsize{$\pm$0.06}  & \textbf{86.03}\scriptsize{$\pm$0.07}    \\ 
\hline
Percentile~\cite{Li2019FullyQN}                                             & 8/8          & 79.88           & 80.93           & 83.08           & 84.31             \\
FQ-ViT~\cite{Lin2022FQViTPQ}                                                 & 8/8          & 80.51           & 82.71           & -               & -                 \\
PTQ4VIT~\cite{Yuan2022PTQ4ViTPQ}                                                & 8/8          & 80.96           & 82.75           & 84.79           & 86.16             \\
EasyQuant~\cite{Wu2020EasyQuantPQ}                                              & 8/8          & 80.95           & 83.00           & 85.10           & 86.39             \\
\rowcolor[rgb]{1,0.808,0.576} NoisyQuant-Linear                & 8/8          & \textbf{81.05}\scriptsize{$\pm$0.03}           & \textbf{83.07}\scriptsize{$\pm$0.03}           & \textbf{85.11}\scriptsize{$\pm$0.04}          & \textbf{86.42}\scriptsize{$\pm$0.02}             \\
\hline
PTQ4VIT~\cite{Yuan2022PTQ4ViTPQ}                                                & 8/8          & 81.24           & 83.10           & 85.14           & 86.39             \\
\rowcolor[rgb]{1,0.808,0.576} NoisyQuant-PTQ4ViT & 8/8          & \textbf{81.25}\scriptsize{$\pm$0.02}  & \textbf{83.13}\scriptsize{$\pm$0.01}  & \textbf{85.20}\scriptsize{$\pm$0.03}  & \textbf{86.44}\scriptsize{$\pm$0.01}    \\
\bottomrule
\end{tabular}
\endgroup
}
\vspace{-18pt}
\end{table}

This section evaluates the impact of NoisyQuant on the PTQ performance of some commonly used vision transformer architectures. 
First, we evaluate the performance on image classification tasks with the ViT~\cite{dosovitskiy2020image}, DeiT~\cite{Touvron2021TrainingDI} and Swin~\cite{Liu2021SwinTH} transformer models. We also evaluate the performance of object detection with the DETR model~\cite{Carion2020EndtoEndOD}. Finally, we provide ablation study on how NoisyQuant improves the PTQ performance when added to each type of layers in the vision transformer model.

\subsection{Experimental settings}

\noindent\textbf{Dataset.} The classification tasks are implemented on the ImageNet-2012 classification dataset~\cite{Russakovsky2015ImageNetLS}, with 1.2 million training images and 50,000 validation images in 1000 classes. 
For object detection task, the MSCOCO 2017 dataset~\cite{Lin2014MicrosoftCC} is utilized to evaluate the PTQ performance, which contains 118,000 training images and 5,000 validation images.
For both tasks we randomly sample 1024 images from the training set as the calibration data for all PTQ methods implemented in our experiments.

\noindent\textbf{Pretrained model architecture.} We perform uniform 6-bit and 8-bit PTQ on different variants of pretrained vision transformer models. For classification tasks we quantize the model family of ViT~\cite{dosovitskiy2020image}, DeiT~\cite{Touvron2021TrainingDI}, and Swin~\cite{Liu2021SwinTH} provided by the Timm library~\cite{rw2019timm}, including large-scale models with 384$\times$384 input resolution (marked with ``*''). For detection, we perform PTQ on the official implementation of DETR~\cite{Carion2020EndtoEndOD} with ResNet-50 backbone.
We use the pretrained model checkpoint provided by the official source of each model, whose floating-point performances are reported in the results tables and match their original papers.

\noindent\textbf{Implementation Details.} Following the mainstream quantization methods~\cite{Liu2021PostTrainingQF,Yuan2022PTQ4ViTPQ}, we quantize all the weights and inputs involved in matrix multiplication. More specifically, we quantize weights of qkv-projectors, attention output projectors, MLPs, linear embeddings, and model heads. Besides, we also quantize input activations of linear layers and matrix multiplication operators. 
Following \cite{Prato2019FullyQT,Zhang2020TernaryBERTDU,Liu2021PostTrainingQF,Yuan2022PTQ4ViTPQ}, we keep layer normalization and softmax operations as full precision.
Apart from special notes, we perform symmetric layer-wise quantization for activations and symmetric channel-wise quantization for weights. In particular, weights are quantized by absolute MinMax values without clamping.
We implement NoisyQuant on top of linear quantizer EasyQuant~\cite{Wu2020EasyQuantPQ} and nonlinear quantizer PTQ4ViT~\cite{Yuan2022PTQ4ViTPQ}. After we apply the selected quantizer to determine the quantization bin width and location on calibration data, we decide the parameters of the Noisy Bias distribution through a linear search. 
We use the empirical activation quantization error as defined in~\cref{equ:obj} as the search objective, where we find the Noisy Bias parameter that can minimize the quantization error.

\subsection{NoisyQuant performance}

Here we compare NoisyQuant with state-of-the-art PTQ methods, including percentile~\cite{Li2019FullyQN}, bit-split~\cite{Wang2020TowardsAP}, Liu~\etal~\cite{Liu2021PostTrainingQF}, FQ-ViT~\cite{Lin2022FQViTPQ}, EasyQuant~\cite{Wu2020EasyQuantPQ}, and PTQ4ViT~\cite{Yuan2022PTQ4ViTPQ}, on ImageNet. 
PTQ performance of different variants of ViT, DeiT, and Swin transformer models are provided in~\cref{comp_vit},~\cref{comp_deit}, and~\cref{comp_swin}, respectively. 
Experiments are conducted for both linear and nonlinear quantizers.

\vspace{-2pt}
\noindent\textbf{Linear quantizer.} Previous linear PTQ methods suffer from severe performance degradation on vision transformer activation quantization. 
EasyQuant~\cite{Wu2020EasyQuantPQ} achieves the best performance among linear quantizers, yet still induce 2-6\% of accuracy drop under 6-bit quantization compared to the floating-point baseline. 
By applying the proposed NoisyQuant method on top of the EasyQuant quantizer, the resulted NoisyQuant-Linear quantizer consistently and significantly improves the PTQ accuracy on all variants of vision transformer models. Under the W6A6 setting, NoisyQuant-Linear achieves performance improvement of 1.73\% on ViT-S, 0.98\% on ViT-B*, 1.1\% on DeiT-S, and 0.5\% on SWIN-T over EasyQuant. 
Under the W8A8 setting, as the performance gap between EasyQuant and floating-point gets smaller, NoisyQuant still appears beneficial in further improving the PTQ performance.
It is worth noting that although nonlinear quantizer like PTQ4ViT~\cite{Yuan2022PTQ4ViTPQ} consistently achieves higher PTQ performance than the linear EasyQuant quantizer, the enhancement brought by NoisyQuant enables the NoisyQuant-Linear quantizer to achieve on-par or even better performance than the nonlinear PTQ4ViT, with much lower overhead in hardware deployment.

\noindent\textbf{Nonlinear quantizer.} As we claim NoisyQuant as a quantizer-agnostic enhancement of PTQ, we further implement NoisyQuant on top of the nonlinear quantizer PTQ4ViT~\cite{Yuan2022PTQ4ViTPQ}, namely NoisyQuant-PTQ4ViT.
NoisyQuant-PTQ4ViT outperforms most results of PTQ4ViT, notably achieving for a 1.25\% improvement on DeiT-S, 0.67\% on ViT-S, and 0.67\% on Swin-B in the W6A6 setting. The W6A6 PTQ performance of Swin-S and Swin-B* for the first time improved to within 0.5\% of the floating-point baseline by the NoisyQuant-PTQ4ViT quantizer, which can never be achieved without the help of NoisyQuant. 

Besides classification models, we report the PTQ performance of the DETR object detection model in~\cref{DETR}. NoisyQuant implemented on top of the EasyQuant quantizer also outperforms the EasyQuant baseline, and all previous linear PTQ quantizers including percentile\cite{Li2019FullyQN}, bit-split\cite{Wang2020TowardsAP}, and Liu~\etal~\cite{Liu2021PostTrainingQF}.

\begin{table}
\centering
\caption{\textbf{PTQ accuracy of DETR models.} Mean average precision (mAP) evaluated on the MSCOCO 2017 dataset is reported. }
\label{DETR}
\resizebox{0.8\linewidth}{!}{
\begingroup
    \setlength{\tabcolsep}{3pt}
\begin{tabular}{l|ccc} 
\toprule
\multicolumn{1}{l}{\textbf{Model}}                                        & \textbf{Method}          & \textbf{W/A}            & \textbf{mAP}              \\ 
\hline
\multirow{6}{*}{\begin{tabular}[c]{@{}l@{}}DETR~\cite{Carion2020EndtoEndOD}\\(COCO2017)\end{tabular}} & Pretrained                 & 32/32                   & 42.0                      \\ 
\cline{2-4}
                                                                          & Percentile~\cite{Li2019FullyQN}                & 8/8                     & 38.6                      \\
                                                                          & Bit-Split~\cite{Wang2020TowardsAP}                & 8/8                     & 40.6                      \\
                                                                          & \multicolumn{1}{c}{Liu~\etal~\cite{Liu2021PostTrainingQF}} & \multicolumn{1}{c}{8/8} & \multicolumn{1}{c}{41.2}  \\
                                                                          & EasyQuant~\cite{Wu2020EasyQuantPQ}               & 8/8                     & 41.1                      \\
                                                                          & {\cellcolor[rgb]{1,0.808,0.576}}NoisyQuant-Linear        & {\cellcolor[rgb]{1,0.808,0.576}}8/8                     & {\cellcolor[rgb]{1,0.808,0.576}}\textbf{41.4}\scriptsize{$\pm$0.05}             \\
\bottomrule
\end{tabular}
\endgroup
}
\end{table}

\subsection{NoisyQuant's impact on different layer types}

As we show the effectiveness of NoisyQuant in improving the PTQ performance of the entire model, here we explore if NoisyQuant is helpful for all types of layers in the vision transformer model. \cref{noisy_layerwise} exhibits the effect of applying NoisyQuant on different types of layers. The checkmark in the table means that we apply NoisyQuant to the input activation of all the layers of that particular type, otherwise the layer is quantized with EasyQuant only. For all layer types, applying NoisyQuant consistently improves the PTQ performance of the entire model. Specifically, 'fc2' layers derive the greatest benefit from NoisyQuant, which corresponds to our previous analysis that the activation distribution after GELU function brings the major challenge to PTQ, which can be resolved with the proposed NoisyQuant method. After adding NoisyQuant to the input activation of all linear layers, the model achieves the maximum performance boost.

\begin{table}
\centering
\caption{\textbf{Applying NoisyQuant on different layer types.} Starting from W6A6 PTQ with the EasyQuant quantizer, we add NoisyQuant to each type of layers in the transformer model.}
\label{noisy_layerwise}
\resizebox{0.8\linewidth}{!}{
\begingroup
    \setlength{\tabcolsep}{3pt}
\begin{tabular}{cccccc} 
\toprule
\textbf{Model}          & \begin{tabular}[c]{@{}c@{}}\textbf{qkv}\\\textbf{noise}\end{tabular} & \begin{tabular}[c]{@{}c@{}}\textbf{proj}\\\textbf{noise}\end{tabular} & \begin{tabular}[c]{@{}c@{}}\textbf{fc1}\\\textbf{noise}\end{tabular} & \begin{tabular}[c]{@{}c@{}}\textbf{fc2}\\\textbf{noise}\end{tabular} & \begin{tabular}[c]{@{}c@{}}\textbf{Top-1}\\\textbf{W6A6}\end{tabular}  \\ 
\hline
\multirow{6}{*}{DeiT-S~\cite{Touvron2021TrainingDI}} & \textcolor{black}{\ding{55}}                                         & \textcolor{black}{\ding{55}}                               & \textcolor{black}{\ding{55}}                                       & \textcolor{black}{\ding{55}}                                       & 75.27                                                \\
                        & \textcolor{green!66!blue}{\ding{51}}                                                        & \textcolor{black}{\ding{55}}                                                                  & \textcolor{black}{\ding{55}}                                                                   & \textcolor{black}{\ding{55}}                                                                    & 75.38                                                                  \\
                        & \textcolor{black}{\ding{55}}                                                                     & \textcolor{green!66!blue}{\ding{51}}                                                          & \textcolor{black}{\ding{55}}                                                                   & \textcolor{black}{\ding{55}}                                                                   & 75.45                                                                  \\
                        & \textcolor{black}{\ding{55}}                                                                     &  \textcolor{black}{\ding{55}}                                                                     & \textcolor{green!66!blue}{\ding{51}}                                                         & \textcolor{black}{\ding{55}}                                                                  & 75.33                                                                  \\
                        & \textcolor{black}{\ding{55}}                                                                     & \textcolor{black}{\ding{55}}                                                                       & \textcolor{black}{\ding{55}}                                                                     & \textcolor{green!66!blue}{\ding{51}}                                                         & 76.21                                                                  \\
                        & \textcolor{green!66!blue}{\ding{51}}                                                         & \textcolor{green!66!blue}{\ding{51}}                                                          & \textcolor{green!66!blue}{\ding{51}}                                                         & \textcolor{green!66!blue}{\ding{51}}                                                         & \textbf{76.37}                                                                  \\ 
\hline
\multirow{6}{*}{Swin-T~\cite{Liu2021SwinTH}} & \textcolor{black}{\ding{55}}                                                                     & \textcolor{black}{\ding{55}}                                                                      & \textcolor{black}{\ding{55}}                                                                     &  \textcolor{black}{\ding{55}}                                                                    & 79.51                                                                  \\
                        & \textcolor{green!66!blue}{\ding{51}}                                                         &  \textcolor{black}{\ding{55}}                                                                     &  \textcolor{black}{\ding{55}}                                                                    & \textcolor{black}{\ding{55}}                                                                     & 79.53                                                                  \\
                        & \textcolor{black}{\ding{55}}                                                                     & \textcolor{green!66!blue}{\ding{51}}                                                          &  \textcolor{black}{\ding{55}}                                                                    & \textcolor{black}{\ding{55}}                                                                     & 79.56                                                                  \\
                        & \textcolor{black}{\ding{55}}                                                                     &  \textcolor{black}{\ding{55}}                                                                    & \textcolor{green!66!blue}{\ding{51}}                                                         & \textcolor{black}{\ding{55}}                                                                     & 79.52                                                                  \\
                        & \textcolor{black}{\ding{55}}                                                                  &  \textcolor{black}{\ding{55}}                                                                     & \textcolor{black}{\ding{55}}                                                                     & \textcolor{green!66!blue}{\ding{51}}                                                         & 79.80                                                                  \\
                        & \textcolor{green!66!blue}{\ding{51}}                                                         & \textcolor{green!66!blue}{\ding{51}}                                                         & \textcolor{green!66!blue}{\ding{51}}                                                 & \textcolor{green!66!blue}{\ding{51}}                                                    & \textbf{80.01}                                                                  \\
\bottomrule
\end{tabular}
\endgroup
}
\vspace{-12pt}
\end{table}

\section{Conclusions}
\label{sec:con}

This work proposes NoisyQuant, a noisy bias-enhanced post-training activation quantization method for the complicated activation distribution of vision transformer models. We theoretically prove the quantization error reduction brought by adding noisy bias to the input activation before quantization, and empirically show the effectiveness of NoisyQuant on both linear and nonlinear quantizers on vision transformer models.
We hope this work opens up a new direction to reduce PTQ quantization error through actively changing the distribution being quantized, and inspires better Noisy Bias distribution design or generation method based on quantizer and activation characteristics.\\
\small{\textbf{Acknowledgement}
This work was supported in part by the National Key Research and Development Program of China under Grant 2022YFB4400900.
The Berkeley team acknowledges support from Berkeley Deep Drive, Intel Corporation, and Panasonic.}

{\small
\bibliographystyle{ieee_fullname}
\bibliography{egbib}
}

\clearpage

\appendix
This document provides additional visualizations and experimental results to support the main paper. We demonstrate results of quantization error on model output in \cref{ap:output_qe_table}, visualize prediction scores for ImageNet classes in \cref{ap:output_vis}, illustrate more histogram examples of the input and output activation distributions on different transformer layers in \cref{ap:vis_input_output}, discuss memory and computation overhead in \cref{ap:overhead}, and show additional experimental results in \cref{ap:add_exp}.

\section{Quantization error of model output}
\label{ap:output_qe_table}

In this section, we show the comparison of the output logits between EasyQuant~\cite{Wu2020EasyQuantPQ} and NoisyQuant with 6-bit ViT~\cite{dosovitskiy2020image}, DeiT~\cite{Touvron2021TrainingDI} and Swin~\cite{Liu2021SwinTH} models. Here NoisyQuant is implemented on top of EasyQuant with the proposed noisy bias enhancement. We go through the whole ImageNet validation set and calculate the mean-square error of model output logits on each quantized model compared to the pretrained floating-point counterpart. As shown in~\cref{tab:model_out_qe}, NoisyQuant achieves quantization error reduction on all model outputs, especially for ViT-S (17\%) and Swin-T (16\%) models.

\begin{table}[h]
\centering
\caption{\textbf{Quantization error of model output.} Models are quantized by EasyQuant and NoisyQuant with the W6A6 setting.}
\label{tab:model_out_qe}
\resizebox{\linewidth}{!}{
\begin{tabular}{lccl} 
\toprule
\textbf{Model}   & \textbf{EasyQuant}~\cite{Wu2020EasyQuantPQ} & \textbf{NoisyQuant} & \textbf{Reduction}     \\ 
\hline
\textbf{ViT-S}   & 1.0400             & 0.8583              & 0.1818 \textbf{(17\%)}  \\
\textbf{ViT-B}   & 0.6365             & 0.5982              & 0.0383 \textbf{(~6\%)}   \\
\textbf{ViT-B*}  & 0.6956             & 0.6360              & 0.0596 \textbf{(~9\%)}   \\ 
\hline
\textbf{DeiT-S}  & 0.3270             & 0.2934              & 0.0335\textbf{ (10\%)}  \\
\textbf{DeitT-B} & 0.2869             & 0.2584              & 0.0284 \textbf{(10\%)}  \\
\textbf{DeiT-B*} & 0.1984             & 0.1760              & 0.0224\textbf{ (11\%)}  \\ 
\hline
\textbf{Swin-T}  & 0.0913             & 0.0765              & 0.0148 \textbf{(16\%)}  \\
\textbf{Swin-S}  & 0.0296             & 0.0289              & 0.0007 \textbf{(~2\%)}   \\
\textbf{Swin-B}  & 0.0505             & 0.0457              & 0.0047 \textbf{(~9\%)}   \\
\textbf{Swin-B*} & 0.0412             & 0.0399              & 0.0013 \textbf{(~3\%)}   \\
\bottomrule
\end{tabular}
}
\vspace{-20pt}
\end{table}

\section{Visualization of model output}
\label{ap:output_vis}

Following \cref{ap:output_qe_table}, we visualize model output in~\cref{fig:model_output} to give further perspectives on how the reduced quantization error achieved by NoisyQuant improved final accuracy. Specifically, we plot prediction logits produced by the floating-point (red), EasyQuant (gray), and NoisyQuant (green) models on the 1000 ImageNet~\cite{Russakovsky2015ImageNetLS} classes, respectively. The highest logits are marked with stars, where the location of the red star corresponds to the ground truth class.
With less quantization error, NoisyQuant logits closely match that of the floating-point model, thus achieving better performance than EasyQuant.

\begin{table}[ht]
\centering
\caption{Comparing to reparameterization.}
\label{tab:add_baselines}
\begin{tabular}{c|c|c|c} 
\toprule
Model  & W/A & Reparam. & \textbf{NoisyQuant}             \\ 
\hline
ViT-S  & 6/6 & 76.66   & \textbf{78.90} $\pm$ 0.06  \\
DeiT-B & 6/6 & 81.03   & \textbf{81.26} $\pm$ 0.04  \\
Swin-S & 6/6 & 82.46   & \textbf{82.83} $\pm$ 0.04  \\
\bottomrule
\end{tabular}
\end{table}

\begin{figure*}[ht]
    \centering
    \begin{minipage}[t]{\linewidth}
        \includegraphics[width=0.98\linewidth]{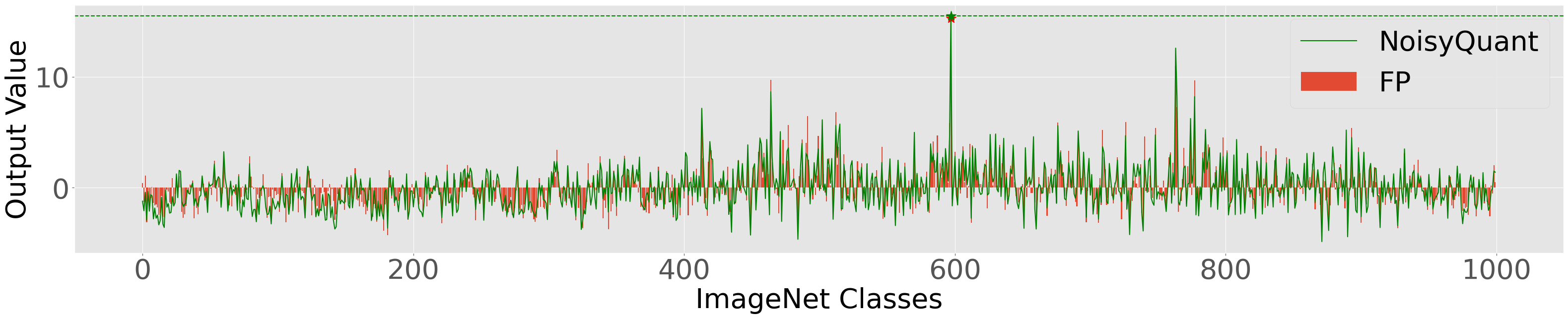}
        \includegraphics[width=0.98\linewidth]{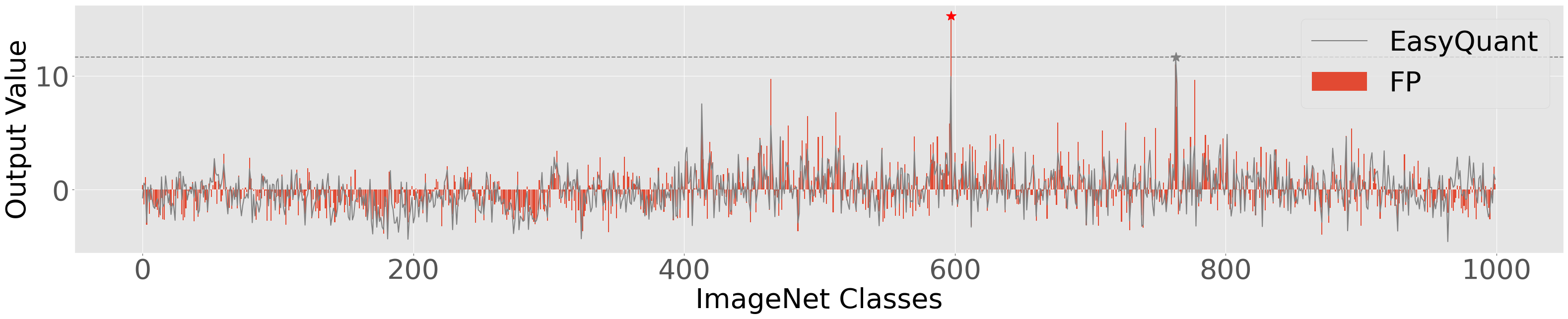}
            \subcaption{Ground truth = 597} 
    \end{minipage}
    \begin{minipage}[t]{\linewidth}
        \includegraphics[width=0.98\linewidth]{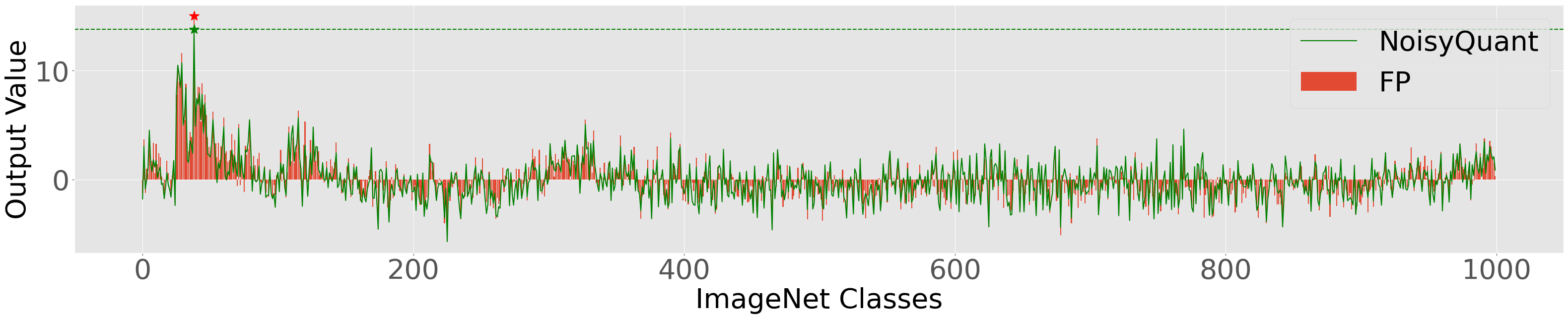}
        \includegraphics[width=0.98\linewidth]{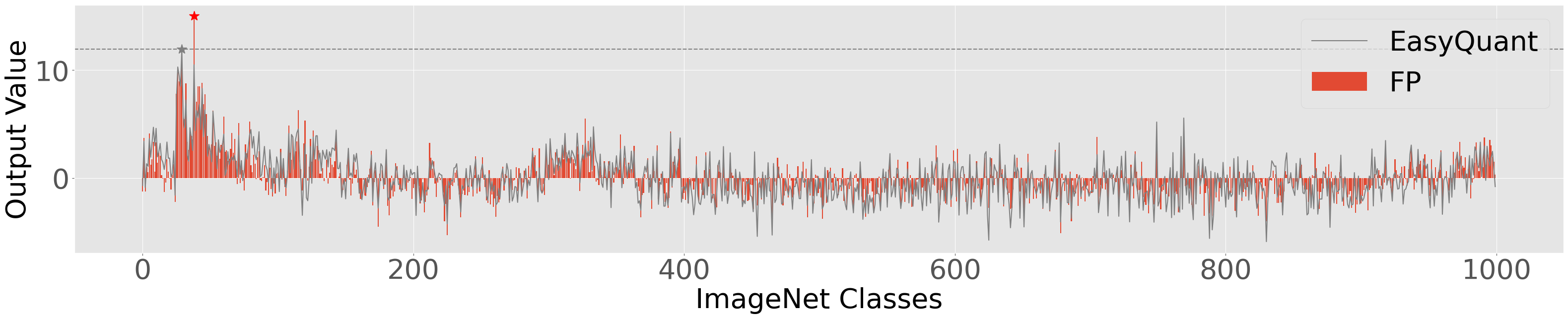}
            \subcaption{Ground truth = 38} 
    \end{minipage}
    \caption{Model output of the floating-point (red), EasyQuant (green), and NoisyQuant (gray) model. The floating-point and NoisyQuant models give correct predictions (red/green star) while EasyQuant gives wrong prediction (gray star).}
    \label{fig:model_output}
\end{figure*}

\section{Additional input and output activation histogram}
\label{ap:vis_input_output}
In this section, we present more histogram examples of layer input and output as previously described in Sec. 4.2 of the main paper. 
We briefly illustrate the pipeline of EasyQuant and NoisyQuant in \cref{fig:addVis}. The top-left sub-figure refers to input activation $X$, and EasyQuant follows the gray arrow while NoisyQuant follows the blue. NoisyQuant utilizes the proper-selected noisy bias $N$ to refine the input before quantization (shown in the bottom-left sub-figure). The output histograms are shown in the right sub-figures, and we point out the mismatch caused by EasyQuant with the orange arrow.

As we have emphasized in the main paper, transformer layers produce sophisticated activation distributions. \cref{fig:addVis} gives more examples from different transformer layers. \cref{fig:addVis-1} and \cref{fig:addVis-2} show  fc2 layers in ViT-S and DeiT-S which takes GELU~\cite{hendrycks2016gaussian} activations as input; the asymmetric and heavy-tailed input activation distribution makes a negative impact on the layer output produced by EasyQuant. Instead, NoisyQuant refines the distribution to achieve a better match in the quantized layer output.
\cref{fig:addVis-3} gives an example of the downsample layer in Swin models which as well enjoys the noisy bias enhancement.

\begin{figure*}[ht]
    \centering
    \begin{minipage}[t]{\linewidth}
        \includegraphics[width=0.9\linewidth]{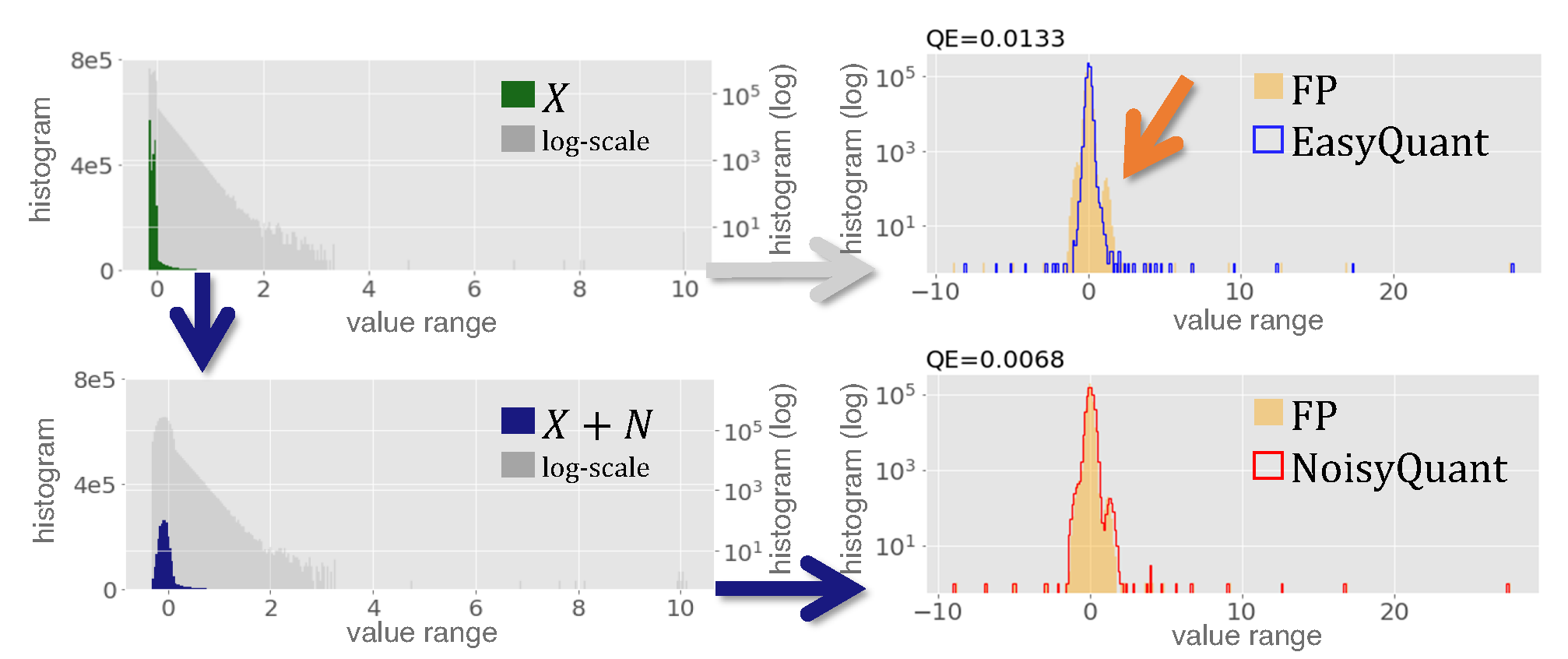}
            \subcaption{fc2 layer in ViT-S}
            \label{fig:addVis-1}
    \end{minipage}
    \begin{minipage}[t]{\linewidth}
        \includegraphics[width=0.9\linewidth]{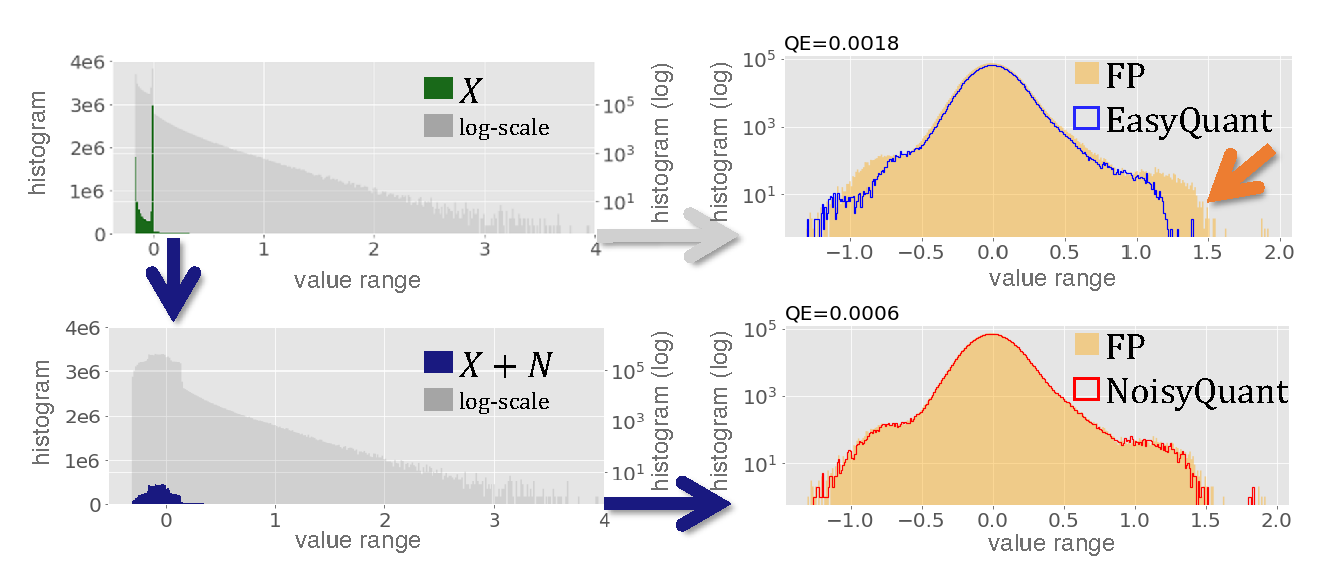}
            \subcaption{fc2 layer in DeiT-S}
            \label{fig:addVis-2}
    \end{minipage}
    \begin{minipage}[t]{\linewidth}
        \includegraphics[width=0.9\linewidth]{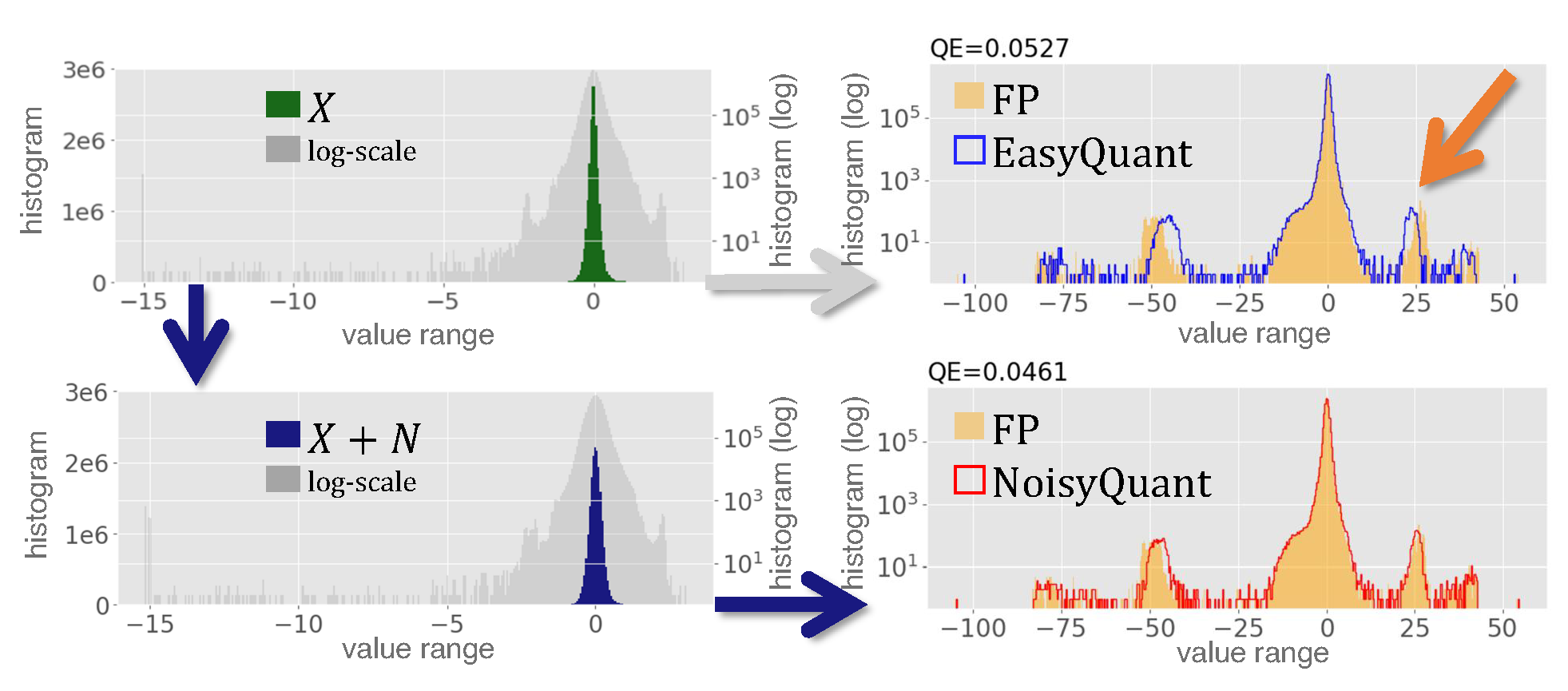}
            \subcaption{downsample layer in Swin-B}
            \label{fig:addVis-3}
    \end{minipage}
    \caption{Input (left) and output (right) histogram on different layers.}
    \label{fig:addVis}
\end{figure*}

\begin{table*}[h]
\centering
\caption{Performance with smaller calibration set.}
\label{tab:calibsize}
\resizebox{1.0\linewidth}{!}{
\begin{tabular}{cc|cccccccccc}
\toprule
\textbf{Size} & \textbf{W/A}  & \textbf{ViT-S} & \textbf{ViT-B} & \textbf{ViT-B*} & \textbf{DeiT-S} & \textbf{DeiT-B} & \textbf{DeiT-B*} & \textbf{Swin-T} & \textbf{Swin-S} & \textbf{Swin-B}  & \textbf{Swin-B*}\\
\midrule
32   & 6/6 & 76.81 & 81.89 & 82.81 & 76.30 & 79.71 & 81.19 & 79.97 & 82.74 & 84.55 & 85.90  \\
128  & 6/6 & 76.87 & 81.97 & 82.86 & 76.47 & 80.20 & 81.24 & 80.13 & 82.68 & 84.44 & 86.00 \\
1024 & 6/6 & 76.86 &	81.90 &	83.00 &	76.37 &	79.77 &	81.40 &	80.01 &	82.78 &	84.57 &	85.90 \\
\bottomrule
\end{tabular}
}
\end{table*}

\section{Memory and computation overhead}
\label{ap:overhead}
\noindent\textbf{Memory overhead.} In practice, for weights $W\in\mathbb{R}^{k\times m}$ and activations $X\in\mathbb{R}^{m\times n}$, we follow the standard implementation to set bias $B\in \mathbb{R}^{k\times 1}$ and sample noise $N\in \mathbb{R}^{m\times 1}$, so the denoising bias $B' = B-q_W(W)N$ is also $\mathbb{R}^{k\times 1}$, where $q_W(\cdot)$ is the quantizer. The sum follows the broadcasting rule. Storing $N$ brings minimal overhead, for instance, DeiT-B* has 86.9M params, with only 0.06M (0.07\%) for storing the noise.\\
\noindent\textbf{Computation overhead.} The matrix multiplication, i.e., $WX$, dominates the computation of ViT linear layers, requiring $10^3\times$ more MAC than the number of adds in $X+N$ and bias. So the cost of FP32 add is negligible ($<$0.4\%) to that of INT8 layer.
Further, $N$ and $B'$ can be INT16 rather than FP32, enabling integer-only inference and reducing the cost of add to $<$0.03\%. We observe no accuracy differences in using INT16 or FP32 for $N$ and $B'$ in our experiments. We estimate the energy cost with 0.23pJ/Int8-MAC, 0.9pJ/FP32-Add, and 0.05pJ/Int16-Add following \cite{horowitz20141}.

\section{Additional experiments}
\label{ap:add_exp}
\noindent\textbf{Ablation study on calibration size.} We follow \cite{Liu2021PostTrainingQF}'s setting for calibration size 1024. Further experiments show that calibration size as low as 32 can still produce similar performance (see \cref{tab:calibsize}).\\
\noindent\textbf{Additional baselines.} Concurrent works \cite{li2022repq,wei2022outlier} introduce the reparameterization approach which reparameterizes LN layer to suppress outliers by scaling down activation values. NoisyQuant is orthogonal as we actively change the activation distribution being quantized without scaling. So NoisyQuant can be plugged in after reparameterization. We reproduce the reparameterization used in the two works and subsequently add NoisyQuant to show consistent improvement in~\cref{tab:add_baselines}.\\
\noindent\textbf{Additional models.} Beyond ViT, on ResMLP-24 with W6A6, NoisyQuant (76.71\%) beat EasyQuant (76.48\%) by 0.23\%.\\

\end{document}